\newtheorem{theorem}{Theorem}
\theoremstyle{definition}
\newtheorem{definition}{Definition}
\theoremstyle{remark}
\title{SAGE: Spuriousness-Aware Guided Prompt Exploration for Mitigating Multimodal Bias}
\author{
    Wenqian Ye\textsuperscript{\rm 1}, Di Wang\textsuperscript{\rm 1}, Guangtao Zheng\textsuperscript{\rm 2}, Bohan Liu\textsuperscript{\rm 1}, Aidong Zhang\textsuperscript{\rm 1}\thanks{Corresponding Author.}
}
\begin{document}

\maketitle

\begin{abstract}
Large vision-language models, such as CLIP, have shown strong zero-shot classification performance by aligning images and text in a shared embedding space. However, CLIP models often develop multimodal spurious biases, which is the undesirable tendency to rely on spurious features. For example, CLIP may infer object types in images based on frequently co-occurring backgrounds rather than the object's core features. This bias significantly impairs the robustness of pre-trained CLIP models on out-of-distribution data, where such cross-modal associations no longer hold. Existing methods for mitigating multimodal spurious bias typically require fine-tuning on downstream data or prior knowledge of the bias, which undermines the out-of-the-box usability of CLIP. In this paper, we first theoretically analyze the impact of multimodal spurious bias in zero-shot classification. Based on this insight, we propose Spuriousness-Aware Guided Exploration (SAGE), a simple and effective method that mitigates spurious bias through guided prompt selection. SAGE requires no training, fine-tuning, or external annotations. It explores a space of prompt templates and selects the prompts that induce the largest semantic separation between classes, thereby improving worst-group robustness.
Extensive experiments on four real-world benchmark datasets and five popular backbone models demonstrate that SAGE consistently improves zero-shot performance and generalization, outperforming previous zero-shot approaches without any external knowledge or model updates.
\end{abstract}

\begin{links}
    \link{Code}{https://github.com/wenqian-ye/spurious_vlm}
\end{links}

\section{Introduction}
\label{sec:intro}
Pre-trained models hold promising potential for open-set classification without the need for additional data collection or training. Pre-trained vision-language models (VLMs) \cite{radford2021learning,jia2021scaling,li2021align,wangsimvlm,li2022grounded}, such as contrastive language-image pre-training (CLIP) models \cite{radford2021learning}, have demonstrated a strong zero-shot prediction capability across diverse downstream tasks. They typically consist of a pre-trained image encoder and a text encoder, from which vision and text representations are aligned in a shared joint embedding space. Thus, zero-shot classification of an image can be achieved simply by matching the image representation to a set of candidate text representations. 



\begin{figure}[t]
    \centering
    \includegraphics[width=\linewidth]{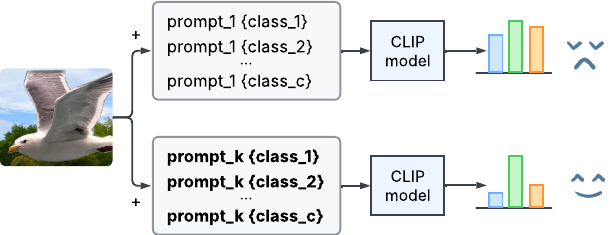}
    \caption{Prompts with greater separation between class similarity scores (e.g., prompt\_k) yield robust zero-shot performance under spurious correlations, whereas those with smaller score differences (e.g., prompt\_1) tend to yield poorer discrimination and worst-group performance.}
    \label{fig:motivation}
\end{figure}


However, recent studies \cite{you2024calibrating,adila2024zeroshot,dehdashtianfairerclip} have found that pre-trained CLIP models often develop undesirable tendencies to rely on spurious correlations between non-essential features and target labels across modalities when making predictions. For example, the class label \texttt{landbird} may become spuriously associated with \texttt{land background} due to frequent co-occurrence in the pre-training data \cite{zheng2024spuriousness,zheng2024learning}. As a result, a CLIP model may incorrectly predict a \texttt{waterbird} as a \texttt{landbird} simply because it appears in a \texttt{land background}. This kind of biased prediction behavior, referred to as multimodal spurious bias, severely impairs the zero-shot generalization ability of CLIP models on out-of-distribution data where such spurious correlations no longer hold \cite{ye2024mm}. For instance, the correlation between \texttt{landbird} and \texttt{land background} may not exist in the downstream out-of-distribution evaluation setting.

Mitigating multimodal spurious bias is essential for ensuring robust generalization across downstream tasks. Existing methods vary significantly in their approaches. Some \cite{yang2023mitigating,you2024calibrating,zhang2024amend,dehdashtianfairerclip} adopt fine-tuning strategies, focusing on task-specific biases and requiring additional data. Although these methods improve robustness to multimodal spurious bias over the vanilla zero-shot approach, they rely on labeled data and do not address the zero-shot setting. ROBOSHOT \cite{adila2024zeroshot} mitigates spurious bias within the language modality without training data, but typically requires specifying spurious attributes by prompting a large language model (LLM) for each downstream task. TIE* \cite{ICLR2025_2649b1d4}, though not relying on LLMs, directly uses spurious attributes to obtain pseudo spurious labels for multimodal bias mitigation.

To mitigate multimodal spurious bias without relying on prior knowledge or external models, we propose a training-free framework, namely \textbf{S}puriousness-\textbf{A}ware \textbf{G}uided \textbf{E}xploration (SAGE). We begin by formally defining multimodal spurious bias and analyzing its impact on zero-shot classification. Our insight is that prompts with larger differences in inter-class similarity tend to better capture core class semantics, which helps reduce reliance on spurious correlations. As shown in Figure~\ref{fig:motivation}, the top part illustrates prompt\_1, where the green bar is only slightly higher than the blue bar, reflecting weak class separation and lower zero-shot performance. The bottom part shows prompt\_k, where the green bar is higher than the lowest blue bar, indicating stronger class discrimination and improved predictive accuracy. Our theoretical and empirical results suggest that higher \textit{separation scores} are associated with greater focus on essential class features rather than spurious ones, thereby improving zero-shot robustness.


Based on this insight, SAGE utilizes a set of diverse candidate prompt templates commonly used with CLIP models or their variants. For each image, SAGE calculates the similarity scores between the image and the class labels under different prompt templates. The prompt template with top greatest difference between the highest and lowest class scores is selected for zero-shot inference. SAGE works entirely without fine-tuning or external supervision and can be applied to any zero-shot vision-language model. Extensive experiments on four benchmark datasets and five backbone models demonstrate that SAGE consistently enhances zero-shot accuracy while effectively mitigating multimodal spurious bias.




\section{Related Work}
\paragraph{Spurious bias in single data modality.} Spurious bias refers to the reliance of models on spurious correlations between input features and targets, leading to poor generalization on out-of-distribution data~\cite{beery2018recognition,geirhos2020shortcut,ijcai2025p795,ye2025cleverhansmiragecomprehensive}. Existing methods typically mitigate spurious bias by retraining models with labeled data, where spurious correlations are either explicitly annotated via group labels \cite{sagawa2019distributionally,kirichenko2022last,deng2024robust}, implicitly identified through group inference \cite{nam2022spread,ye2025improving,zhengneurontune}, or sample reweighting \cite{nam2020learning,liu2021just,qiu2023simple,labonte2024towards}. Our work addresses the relatively under-explored challenge of mitigating spurious bias in a zero-shot multimodal setting where no retraining data is available.


\paragraph{Debiasing fine-tuned VLMs.}  Multimodal spurious bias refers to the tendency of models to rely on spurious correlations in one modality (e.g., image background) to infer targets in another (e.g., object names). In VLMs, such bias may arise from misalignment between modalities during pre-training or fine-tuning~\cite{tong2024eyes,sun2024aligning}. Existing approaches mitigate this bias in fine-tuned VLMs using contrastive learning with or without group labels~\cite{yang2023mitigating,zhang2022contrastive,you2024calibrating}, or by disentangling spurious and core features via prompt tuning~\cite{zhang2024amend} or latent projection~\cite{dehdashtianfairerclip}. In contrast, our method targets the zero-shot setting without any downstream data, which is applicable on a broader real-world scenarios.


\paragraph{Zero-shot debiasing.}  Debiasing in the zero-shot setting aims to mitigate multimodal spurious bias learned during pre-training, where target texts may align with spurious image features~\cite{ge2023improving}. Existing methods typically leverage text data: \citet{chuang2023debiasing} use prompts with known spurious attributes to adjust CLIP classifier weights, while \citet{adila2024zeroshot} extract core and spurious attributes via LLMs to enhance image features. Moreover, \citet{ICLR2025_2649b1d4} rely on explicit spurious attribute information to generate pseudo-labels that help reduce bias in multimodal embeddings.  In contrast, our method mitigates bias by selecting prompts based on a \textit{separation score} computed from similarity differences between class labels, without relying on LLMs or prior knowledge.

\begin{figure*}[t]
    \centering
    \includegraphics[width=\linewidth]{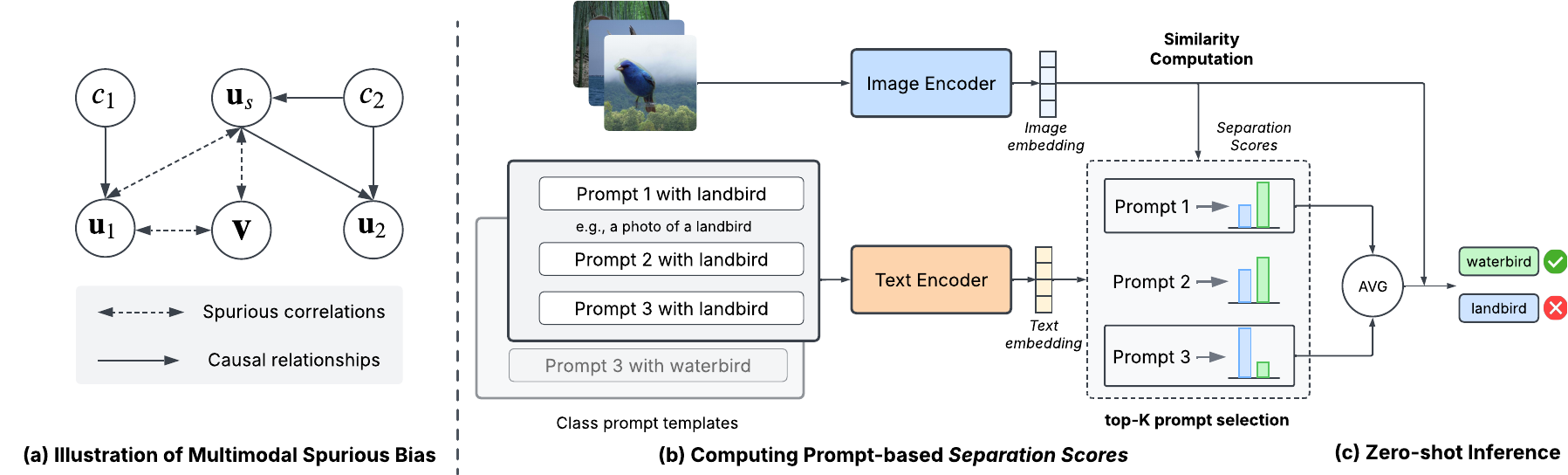}
    \caption{Method overview. (a) Illustration of multimodal spurious bias, where $c_2$ denotes a class label, $\mathbf{v}$ denotes an image representation, $\mathbf{u}_s$ denotes a textual spurious feature, $\mathbf{u}_1$ and $\mathbf{u}_2$ denote text representations for the class $c_1$ and $c_2$ respectively. (b) For each test image, we evaluate $M$ prompt templates and compute a \textit{separation score} that measures how well each prompt distinguishes between classes in the joint image-text space. The top-$K$ templates with the highest scores are selected. (c) Zero-shot classification is then performed by ensembling predictions from the $K$ class-discriminative prompts selected for that image.}
    \label{fig:method-overview}
\end{figure*}

\section{Methodology}
 We first theoretically analyze the multimodal spurious bias in VLMs. Based on the insights gained in the analysis, we introduce spuriousness-aware guided prompt exploration (SAGE) that selects prompts based on a \textit{separation score} to mitigate such bias in a zero-shot setting.
\subsection{Preliminary}

A CLIP \cite{radford2021learning} model is trained to align the representation of an image $x$ from its vision encoder $\phi$ and the representation of a text description $t$ from its text encoder $\psi$ in a joint embedding space when the text description $t$ matches with the image $x$. Specifically,  let $\mathbf{v}=\phi(x)\in\mathbb{R}^{D}$ denote the vision representation for the image $x$ and $\mathbf{u}=\psi(t)\in\mathbb{R}^{D}$ be the text representation for the text description $t$, where $D$ is the number of embedding dimensions. Then, the CLIP training objective \cite{radford2021learning} essentially aims to maximize the probability of $\mathbf{v}$ given $\mathbf{u}$ and the probability of $\mathbf{u}$ given $\mathbf{v}$ over all training image-text pairs, i.e.,
\begin{equation}\label{eq:ori-obj}
\phi,\psi=\arg\max_{\phi',\psi'}\mathbb{E}_{p(x,t)}\Big(p(\mathbf{v}|\mathbf{u}) + p(\mathbf{u}|\mathbf{v})\Big),
\end{equation}
where  $p(x,t)$ denotes the joint distribution of matching image-text pairs in the training set. During training, CLIP models learn to align the embeddings of matching image-text pairs—for example, bringing together the representation of a landbird image and the phrase ``a photo of a landbird". At the same time, it pushes apart the representations of mismatched pairs, such as a waterbird image with the same phrase. Ideally, for a matching image-text pair $(x,t)$, we will obtain $p(\mathbf{v}|\mathbf{u}) \approx p(\mathbf{u}|\mathbf{v})$ after training.


\paragraph{Zero-shot classification.} Given an image $x$ belonging to one of $C$ classes $\{c_i\}_{i=1}^C$, zero-shot classification first constructs $C$ text descriptions by  inserting each class name $c_i$ into a predefined text template, such as ``a photo of a [$c_i$]" (e.g., ``a photo of a landbird"). Each description is then encoded into a text representation $\mathbf{u}_i$ for each class $c_i$. Then, the zero-shot prediction $\hat{y}$ is:
\begin{equation}\label{eq:zero-shot-classification}
    \hat{y}=\arg\max_{i} p(\mathbf{u}_i|\mathbf{v})=\arg\max_{i} \frac{\mathbf{v}^T\mathbf{u}_i}{\|\mathbf{v}\|_2\|\mathbf{u}_i\|_2},
\end{equation}
where $\mathbf{v}$ is the vision representation for the input image $x$, $\|\cdot\|_2$ is the Euclidean norm of a vector, and $p(\mathbf{u}_i|\mathbf{v})$ is defined to be proportional to $\mathbf{v}^T\mathbf{u}_i$.

\subsection{Multimodal Spurious Bias}
In practice, a given text description $t$ may not fully describe the content in $x$. For example, $x$ could be an image depicting a landbird with a land background, and $t$ could simply be ``a photo of a landbird", which only describes the primary object in the image. When a CLIP model learns to align many such image-text pairs where land backgrounds spuriously correlate with the target ``landbird", then the model may inadvertently learn to align the representation of ``a photo of a landbird" with the representation of land backgrounds, instead of the defining features of landbirds. The misalignment causes a \textit{multimodal spurious bias} in the model which tends to use land backgrounds in images to infer their descriptions. Due to the misalignment, an image of waterbird with a land background is incorrectly paired with the description ``a photo of a landbird".


To formally define multimodal spurious bias,  we introduce $\mathbf{u}_s\in\mathbb{R}^D$ to represent a latent textual spurious feature, such as the missing ``land background" in the description ``a photo of a landbird". With $\mathbf{u}_s$, we can conveniently expand $p(\mathbf{v}|\mathbf{u})$ and $p(\mathbf{u}|\mathbf{v})$ in  \eqref{eq:ori-obj} as the marginalization over all possible textual spurious features, i.e.,
\begin{equation}\label{eq:image-conditioned-on-text}
p(\mathbf{v}|\mathbf{u}) = \int_{\mathbf{u}_s}p(\mathbf{v}|\mathbf{u},\mathbf{u}_s) p(\mathbf{u}_s|\mathbf{u})d\mathbf{u}_s,
\end{equation}
and
\begin{equation}\label{eq:text-conditioned-on-image}
p(\mathbf{u}|\mathbf{v})=\int_{\mathbf{u}_s}p(\mathbf{u}|\mathbf{v},\mathbf{u}_s)p(\mathbf{u}_s|\mathbf{v})d\mathbf{u}_s.
\end{equation}
In the pre-training data, if the majority of images with their text representation $\mathbf{u}$ have a spurious feature represented by $\mathbf{u}_s$, then a CLIP model may learn the strong correlations between the spurious feature $\mathbf{u}_s$ and the image representation $\mathbf{v}$ as well as the text representation $\mathbf{u}$. As a result, the model will develop multimodal spurious bias and we will have $p(\mathbf{u}_s|\mathbf{u})\approx 1$ and $p(\mathbf{u}_s|\mathbf{v})\approx 1$. We formally define multimodal spurious bias as follows.

\begin{definition}[\textbf{Multimodal spurious bias}]\label{def:multimodal-spurious-bias}
    Consider a pre-trained CLIP model consisting of a vision encoder $\phi$ and a text encoder $\psi$. Given an image-text pair $(x,t)$ and a latent spurious feature $\mathbf{u}_s$,  a multimodal spurious bias in the model relevant to $\mathbf{u}_s$ satisfies the following conditions:
    \begin{equation}\label{eq:image-conditioned-on-text-approx}
p(\mathbf{v}|\mathbf{u}) \approx p(\mathbf{v}|\mathbf{u},\mathbf{u}_s),
\end{equation}
and
\begin{equation}\label{eq:text-conditioned-on-image-approx}
p(\mathbf{u}|\mathbf{v})\approx p(\mathbf{u}|\mathbf{v},\mathbf{u}_s),
\end{equation}
where $\mathbf{v}=\phi(x)$ and $\mathbf{u}=\psi(t)$.
\end{definition}

The above conditions indicate that $p(\mathbf{u}_s|\mathbf{u})\approx 1$ and $p(\mathbf{u}_s|\mathbf{v})\approx 1$ are based on Eq.~\eqref{eq:image-conditioned-on-text} and Eq.~\eqref{eq:text-conditioned-on-image}, and
the pre-trained model tends to align $\mathbf{v}$ and $\mathbf{u}$ with $\mathbf{u}_s$. This indicates a misalignment between the vision representation $\mathbf{v}$ and the text representation $\mathbf{u}$.  When the pre-trained model is tested on the data with $p(\mathbf{u}_s|\mathbf{u})\ll 1$ and $p(\mathbf{u}_s|\mathbf{v})\ll 1$, i.e., the spurious features in the test data no longer have strong correlations with input images and the corresponding text descriptions compared to the training data, such as the waterbird image with a land background where a land background is no longer associated with landbird, then the model may struggle on most of the test data, showing degraded zero-shot classification performance.

\subsection{Theoretical Insights}\label{sec:theoretical-insights}

We first theoretically analyze how multimodal spurious bias affects zero-shot classification. The insights derived from our analysis will guide the design of our multimodal spurious bias mitigation method in the following section.

Without loss of generality, we consider a zero-shot classification task with two classes $c_1$ and $c_2$. Given a prompt template, we can obtain text representations for the two classes as $\mathbf{u}_1$ and $\mathbf{u}_2$. Consider an image representation $\mathbf{v}$ from class $c_2$ with an unknown spurious feature described by the text representation $\mathbf{u}_s$. The zero-shot prediction $\hat{y}$ can be obtained as follows,
\begin{equation}
    \hat{y}=\arg\max_{i\in\{1,2\}}p(\mathbf{u}_i|\mathbf{v}).
\end{equation}
We assume a multimodal spurious bias between $\mathbf{u}_1$,  $\mathbf{v}$, and $\mathbf{u}_s$, as indicated by the dashed arrows in Figure \ref{fig:method-overview}(a). Then, the zero-shot prediction may be biased towards the class label $c_1$, instead of the true class label $c_2$, as supported by the following theorem.

\begin{theorem}
    Consider a pre-trained CLIP model from which we obtain two text representations $\mathbf{u}_1$, $\mathbf{u}_2$ for the classes $c_1$ and $c_2$ respectively, an image representation $\mathbf{v}$ with the class label $c_2$, and a textual spurious feature $\mathbf{u}_s$ related to $\mathbf{v}$. Assume $\mathbf{u}_1$, $\mathbf{v}$, and $\mathbf{u}_s$ formulate a multimodal spurious bias. Then, the model is biased towards predicting $\mathbf{v}$ as $c_1$ instead of its true class label $c_2$.
\end{theorem}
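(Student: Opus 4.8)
The plan is to translate the probabilistic bias conditions into geometric alignment statements on the unit sphere, and then use transitivity of alignment to push the similarity score of the wrong class $c_1$ above that of the true class $c_2$. First I would unpack Definition~\ref{def:multimodal-spurious-bias} for the triple $(\mathbf{u}_1,\mathbf{v},\mathbf{u}_s)$. Substituting the approximate equalities \eqref{eq:image-conditioned-on-text-approx} and \eqref{eq:text-conditioned-on-image-approx} back into the marginalizations \eqref{eq:image-conditioned-on-text} and \eqref{eq:text-conditioned-on-image}, the integrals collapse to their integrand at $\mathbf{u}_s$ exactly when the conditionals of the spurious feature concentrate; this recovers $p(\mathbf{u}_s\mid\mathbf{u}_1)\approx 1$ and $p(\mathbf{u}_s\mid\mathbf{v})\approx 1$, precisely the statement made in the paragraph following the definition.

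Next I would convert these two near-unit conditionals into cosine-similarity statements via the CLIP scoring convention of \eqref{eq:zero-shot-classification}, namely $p(\mathbf{u}_i\mid\mathbf{v})\propto \mathbf{v}^{\top}\mathbf{u}_i/(\|\mathbf{v}\|_2\|\mathbf{u}_i\|_2)$, applied with $\mathbf{u}_s$ playing the role of a text-side representation (which is how it is introduced, as a latent textual spurious feature in $\mathbb{R}^D$). A conditional probability that is essentially maximal corresponds to a near-maximal cosine, which on the sphere forces near-collinearity: $\cos(\mathbf{v},\mathbf{u}_s)\approx 1$ and $\cos(\mathbf{u}_1,\mathbf{u}_s)\approx 1$, so both $\mathbf{v}$ and $\mathbf{u}_1$ sit in a small angular neighborhood of $\mathbf{u}_s$.

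I would then close with a triangle inequality on geodesic angles. Writing $\theta(\cdot,\cdot)$ for the angle between unit vectors, $\theta(\mathbf{v},\mathbf{u}_1)\le\theta(\mathbf{v},\mathbf{u}_s)+\theta(\mathbf{u}_s,\mathbf{u}_1)$, and both terms on the right being near zero gives $\cos(\mathbf{v},\mathbf{u}_1)\approx 1$, the largest value any similarity $\cos(\mathbf{v},\cdot)$ can attain. Because $c_1$ and $c_2$ are distinct classes, their prompts are not collinear, so $\cos(\mathbf{u}_1,\mathbf{u}_2)$ is bounded away from $1$; together with $\mathbf{v}\approx\mathbf{u}_1$ this yields $\cos(\mathbf{v},\mathbf{u}_2)<\cos(\mathbf{v},\mathbf{u}_1)$. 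Substituting back into the argmax of \eqref{eq:zero-shot-classification} gives $\hat{y}=c_1\neq c_2$, which is the asserted bias.

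The step I expect to be the main obstacle is making the probabilistic-to-geometric bridge rigorous: Definition~\ref{def:multimodal-spurious-bias} only asserts $p(\mathbf{u}_s\mid\mathbf{v})\approx 1$ up to an unspecified tolerance, and I must argue that a near-maximal conditional genuinely forces a near-maximal cosine rather than merely a large one. This is delicate precisely because $\mathbf{v}$ is a genuine class-$c_2$ image and therefore still carries core $c_2$ features; the conclusion hinges on the spurious-bias assumption being strong enough that the spurious direction dominates $\mathbf{v}$ and swamps the residual core component that would otherwise tilt $\mathbf{v}$ toward $\mathbf{u}_2$. Propagating the tolerances through the triangle inequality and checking that the gap $\cos(\mathbf{v},\mathbf{u}_1)-\cos(\mathbf{v},\mathbf{u}_2)$ remains strictly positive is the quantitative core of the argument; the remaining steps are bookkeeping.
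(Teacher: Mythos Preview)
Your proposal reaches the right conclusion but takes a genuinely different route from the paper. You translate the bias conditions into near-collinearity statements on the unit sphere and close with the angular triangle inequality; the paper instead stays entirely in probability space. It expands $p(\mathbf{u}_1\mid\mathbf{v})$ via the marginalization \eqref{eq:text-conditioned-on-image}, invokes Definition~\ref{def:multimodal-spurious-bias} to collapse the integral, and then applies Bayes' theorem together with the graphical-model independence $p(\mathbf{u}_1,\mathbf{u}_s\mid\mathbf{v})=p(\mathbf{u}_1,\mathbf{u}_s)$ (read off from Figure~\ref{fig:method-overview}(a)) to obtain the closed form $p(\mathbf{u}_1\mid\mathbf{v})\approx p(\mathbf{u}_s\mid\mathbf{u}_1)\,p(\mathbf{u}_1)$. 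The biased-prediction inequality then follows from the ratio $p(\mathbf{u}_s\mid\mathbf{u}_1)p(\mathbf{u}_1)/p(\mathbf{u}_2\mid\mathbf{v})>1$.

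The trade-off is this: your geometric argument is more elementary and, as you rightly flag, makes the quantitative tolerance propagation explicit, but it requires extending the cosine-proportionality convention of \eqref{eq:zero-shot-classification} to text--text pairs like $p(\mathbf{u}_s\mid\mathbf{u}_1)$, which the paper never actually states. The paper's Bayesian route avoids that extension and, more importantly, isolates $p(\mathbf{u}_s\mid\mathbf{u}_1)$ as the single prompt-controllable factor driving the bias. This decomposition is what directly motivates SAGE in the next subsection: choosing a template that shrinks $p(\mathbf{u}_s\mid\mathbf{u}_1)$ widens the class-separation margin. Your argument proves the theorem but does not surface that lever as cleanly.
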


\begin{proof}
We first follow Eq. \eqref{eq:text-conditioned-on-image} to expand $p(\mathbf{u}_1|\mathbf{v})$, i.e.,
    \begin{align}
        p(\mathbf{u}_1|\mathbf{v}) &= \int_{\mathbf{u}_s}p(\mathbf{u}_1|\mathbf{v},\mathbf{u}_s)p(\mathbf{u}_s|\mathbf{v})d\mathbf{u}_s\label{eq:text-image-expansion}\\
        &\approx p(\mathbf{u}_1|\mathbf{v},\mathbf{u}_s)p(\mathbf{u}_s|\mathbf{v})\label{eq:text-image-expansion-approx}\\
        &= p(\mathbf{u}_s|\mathbf{u}_1)p(\mathbf{u}_1) \label{eq:text-image-expansion-approx-simplify},
    \end{align}
where the approximation in \eqref{eq:text-image-expansion-approx} uses the definition of multimodal spurious bias in Definition \ref{def:multimodal-spurious-bias}, and Eq. \eqref{eq:text-image-expansion-approx-simplify} can be derived via Bayes' theorem, i.e.,
\begin{align}
    p(\mathbf{u}_1|\mathbf{v},\mathbf{u}_s)=\frac{p(\mathbf{u}_1, \mathbf{u}_s|\mathbf{v})}{p(\mathbf{u}_s|\mathbf{v})}=\frac{p(\mathbf{u}_s|\mathbf{u}_1)p(\mathbf{u}_1)}{p(\mathbf{u}_s|\mathbf{v})},
\end{align}
where the last equality follows the fact that $p(\mathbf{u}_1,\mathbf{u}_s|\mathbf{v})=p(\mathbf{u}_1,\mathbf{u}_s)$, i.e., $\mathbf{u}_s$ and $\mathbf{u}_1$ do not depend on $\mathbf{v}$, as depicted in Figure \ref{fig:method-overview}(a). Therefore, we have the following inequality:
\begin{align}\label{eq:biased-prediction-analysis}
    \frac{p(\mathbf{u}_1|\mathbf{v})}{p(\mathbf{u}_2|\mathbf{v})}\approx \frac{p(\mathbf{u}_s|\mathbf{u}_1)p(\mathbf{u}_1)}{p(\mathbf{u}_2|\mathbf{v})} > 1,
\end{align}
where the inequality follows from the condition that $\mathbf{u}_1$, $\mathbf{v}$, and $\mathbf{u}_s$ formulate a multimodal spurious bias, i.e., $p(\mathbf{u}_s|\mathbf{u}_1)\approx 1$, $p(\mathbf{u}_2|\mathbf{v})\approx 0$  given that $p(\mathbf{u}_s|\mathbf{v})\approx 1$, and $p(\mathbf{u}_1)>0$ is a constant. Therefore, the model's prediction on $\mathbf{v}$ is biased towards the incorrect label $c_1$.
\end{proof}


Based on the above analysis, SAGE is motivated to choose such prompt template $\mathbf{u}$ that directly controls the multimodal spurious bias term $p(\mathbf{u}_s|\mathbf{u}_1)$. When a prompt induces spurious biases (i.e., $p(\mathbf{u}_s|\mathbf{u}_1) \approx 1$), the model's predictive probability on two classes $p(\mathbf{u}_1|\mathbf{v})$
and $p(\mathbf{u}_2|\mathbf{v})$ becomes arbitrarily close. This indicates a low class separation. Conversely, when another prompt template $\mathbf{u}'$ is less affected by spurious biases (i.e., $p(\mathbf{u}_s|\mathbf{u}_1') \ll 1$), the predictive probability for the incorrect class $p(\mathbf{u}_1'|\mathbf{v})$ decreases significantly, while the predictive probability for the correct class $p(\mathbf{u}_2'|\mathbf{v})$ remains high. This creates a large margin between $p(\mathbf{u}_2'|\mathbf{v})$ and $p(\mathbf{u}_1'|\mathbf{v})$. Therefore, maximizing class separation margin can serve as a robust and practical proxy that effectively mitigates spurious biases without prior knowledge of the spurious attributes in advance.


\begin{table*}[ht]
    \centering
    \footnotesize
    \setlength{\tabcolsep}{2pt}
    \resizebox{\textwidth}{!}{
    \begin{tabular}{llccc|ccc|ccc|ccc}
        \toprule
        \multirow{3}{*}{\centering Method} 
        & \multirow{3}{*}{\centering Model}
        & \multicolumn{3}{c}{\multirow{1}{*}{\centering Waterbirds}}
        & \multicolumn{3}{c}{\multirow{1}{*}{\centering CelebA}}
        & \multicolumn{3}{c}{\multirow{1}{*}{\centering PACS}}
        & \multicolumn{3}{c}{VLCS} 
        \\
        \cmidrule(lr){3-5}\cmidrule(lr){6-8}\cmidrule(lr){9-11}\cmidrule(lr){12-14}
        & 
        & AVG($\uparrow$) & WGA($\uparrow$) & HM($\uparrow$)
        & AVG($\uparrow$) & WGA($\uparrow$) & HM($\uparrow$)
        & AVG($\uparrow$) & WGA($\uparrow$) & HM($\uparrow$)
        & AVG($\uparrow$) & WGA($\uparrow$) & HM($\uparrow$)

        \\
        \midrule
        \midrule
        \multirow{6}{*}{\centering ZS}
        & CLIP-RN-50     
          & 88.7 & 41.0 & 56.1
          & 81.6 & 75.2 & 78.3
          & 91.8 & 63.3 & 74.9
          & 75.5 & 34.1 & 47.0 \\
        & CLIP-ViT-B/32  
          & 80.4 & 27.5 & 41.0
          & 78.3 & 68.9 & 73.3
          & 96.6 & 82.1& 88.8
          & 75.4 & 20.5& 32.2 \\
        & CLIP-ViT-L/14  
          & 88.6 & 27.6 & 42.1
          & 80.5 & 74.0 & 77.1
          & 98.1 & 79.8& 88.0
          & 72.4 & 4.1 & 7.8 \\
        & ALIGN            
          & 72.3 & 50.0 & 59.1
          & 82.4 & 78.2 & 80.2
          & 95.8 & 69.6 & 80.6
          & 78.5 & 34.1 & 47.5 \\
        & AltCLIP          
          & 90.3 & 37.2 & 52.7
          & 82.9 & 80.2 & 81.5
          & 98.5 & 82.5 & 89.8
          & 78.8 & 22.0 & 34.4 \\
        \cmidrule(lr){2-14}
        & \cellcolor[HTML]{FFFFFF}\textbf{Average}
          & \cellcolor[HTML]{EFEFEF}84.1
          & \cellcolor[HTML]{EFEFEF}36.7
          & \cellcolor[HTML]{EFEFEF}51.1
          & \cellcolor[HTML]{EFEFEF}81.1
          & \cellcolor[HTML]{EFEFEF}75.3
          & \cellcolor[HTML]{EFEFEF}78.1
          & \cellcolor[HTML]{EFEFEF}96.2
          & \cellcolor[HTML]{EFEFEF}75.5
          & \cellcolor[HTML]{EFEFEF}84.6
          & \cellcolor[HTML]{EFEFEF}76.1
          & \cellcolor[HTML]{EFEFEF}23.0
          & \cellcolor[HTML]{EFEFEF}35.3
          \\
        \midrule

        \multirow{6}{*}{\centering ROBOSHOT}
        & CLIP-RN-50     
          & 72.1 & 27.6 & 39.9
          & 81.6 & 74.9 & 78.1
          & 92.3 & 72.4 & 81.1
          & 77.6 & 37.6 & 50.7 \\
        & CLIP-ViT-B/32  
          & 74.2 & 39.3 & 51.4
          & 82.1 & 75.2 & 78.5
          & 96.6 & 83.5& 89.6
          & 77.1 & 35.2& 48.3 \\
        & CLIP-ViT-L/14  
          & 79.8 & 48.1 & 60.0
          & 85.3 & 82.2& 83.7
          & 98.0 & 81.3 & 88.9
          & 70.9 & 12.2 & 20.8\\
        & ALIGN            
          & 52.6 & 38.3 & 44.3
          & 87.0 & 84.8& 85.9
          & 94.7 & 63.2 & 75.8
          & 77.4  & 39.8 & 52.6 \\
        & AltCLIP          
          & 78.5 & 54.2 & 64.1
          & 86.1 & 80.6 & 83.3
          & 98.8 & 89.4 & 93.9
          & 78.3 & 25.7 & 38.7\\
        \cmidrule(lr){2-14}
        & \cellcolor[HTML]{FFFFFF}\textbf{Average}
          & \cellcolor[HTML]{EFEFEF}71.4
          & \cellcolor[HTML]{EFEFEF}\underline{41.5}
          & \cellcolor[HTML]{EFEFEF}52.5
          & \cellcolor[HTML]{EFEFEF}84.4
          & \cellcolor[HTML]{EFEFEF}\underline{79.5}
          & \cellcolor[HTML]{EFEFEF}\underline{81.9}
          & \cellcolor[HTML]{EFEFEF}96.1
          & \cellcolor[HTML]{EFEFEF}\underline{78.0}
          & \cellcolor[HTML]{EFEFEF}\underline{86.1}
          & \cellcolor[HTML]{EFEFEF}76.3
          & \cellcolor[HTML]{EFEFEF}30.1
          & \cellcolor[HTML]{EFEFEF}43.2
          \\
        \midrule

          \multirow{6}{*}{\centering TIE*}
        & CLIP-RN-50     
          & 83.8 & 34.1 & 48.5
          & 72.2 & 65.8 & 68.9
          & 88.6 & 54.7 & 67.6
          & 79.3 & 40.5 & 53.6 \\
        & CLIP-ViT-B/32  
          & 86.3 & 55.8 & 67.8
          & 85.9 & 69.1 & 76.6
          & 97.3 & 83.1 & 89.6
          & 79.7 & 32.4 & 46.1 \\
        & CLIP-ViT-L/14  
          & 87.6 & 39.1 & 54.1
          & 88.2 & 83.5 & 85.8
          & 97.7 & 82.5 & 89.5
          & 80.3 & 34.1 & 47.9\\
        & ALIGN            
          & 80.9 & 42.2 & 55.5
          & 86.6 & 82.2 & 84.3
          & 96.4 & 76.4 & 85.2
          & 81.3  & 26.2 & 39.6 \\
        & AltCLIP          
          & 82.9 & 21.0 & 33.5
          & 50.6 & 48.6& 49.6
          & 98.6 & 89.2& 93.7
          & 81.9 & 25.3 & 38.7\\
        \cmidrule(lr){2-14}
        & \cellcolor[HTML]{FFFFFF}\textbf{Average}
          & \cellcolor[HTML]{EFEFEF}84.3
          & \cellcolor[HTML]{EFEFEF}38.4
          & \cellcolor[HTML]{EFEFEF}\underline{52.8}
          & \cellcolor[HTML]{EFEFEF}76.7
          & \cellcolor[HTML]{EFEFEF}69.8
          & \cellcolor[HTML]{EFEFEF}73.1
          & \cellcolor[HTML]{EFEFEF}95.7
          & \cellcolor[HTML]{EFEFEF}77.2
          & \cellcolor[HTML]{EFEFEF}85.5
          & \cellcolor[HTML]{EFEFEF}80.5
          & \cellcolor[HTML]{EFEFEF}\underline{31.7}
          & \cellcolor[HTML]{EFEFEF}\underline{45.5}
          \\
          \midrule

                  \multirow{6}{*}{\centering \textbf{Ours (SAGE)}}
        & CLIP-RN-50     
          & 91.5 & 41.3 & 56.9
          & 82.2 & 77.5 & 79.8
          & 91.9 & 63.6 & 75.2
          & 74.8 & 36.8 & 49.3 \\
        & CLIP-ViT-B/32  
          & 92.3 & 46.0 & 61.4
          & 79.6 & 76.0 & 77.8
          & 97.0 & 85.0 & 90.6
          & 76.9 & 38.1 & 51.0 \\
        & CLIP-ViT-L/14  
          & 90.2 & 47.8 & 62.5
          & 85.7 & 83.9 & 84.8
          & 98.3 & 84.6 & 90.9
          & 74.6 & 23.9 & 36.2\\
        & ALIGN            
          & 81.6 & 47.0 & 59.6
          & 84.3 & 82.3 & 83.3
          & 97.6 & 87.0 & 92.0
          & 72.9  & 37.5 & 49.5 \\
        & AltCLIP          
          & 89.1 & 42.6 & 57.6
          & 85.2 & 83.3 & 84.2
          & 98.4 & 89.5 & 93.7
          & 79.9 & 32.5 & 46.2\\
        \cmidrule(lr){2-14}
        & \cellcolor[HTML]{FFFFFF}\textbf{Average}
          & \cellcolor[HTML]{EFEFEF}88.9
          & \cellcolor[HTML]{EFEFEF}\textbf{44.9}
          & \cellcolor[HTML]{EFEFEF}\textbf{59.7}
          & \cellcolor[HTML]{EFEFEF}83.4
          & \cellcolor[HTML]{EFEFEF}\textbf{80.6}
          & \cellcolor[HTML]{EFEFEF}\textbf{82.0}
          & \cellcolor[HTML]{EFEFEF}96.6
          & \cellcolor[HTML]{EFEFEF}\textbf{81.9}
          & \cellcolor[HTML]{EFEFEF}\textbf{88.7}
          & \cellcolor[HTML]{EFEFEF}75.8
          & \cellcolor[HTML]{EFEFEF}\textbf{33.8}
          & \cellcolor[HTML]{EFEFEF}\textbf{46.7}
          \\
        \bottomrule
    \end{tabular}
    }
    \caption{Performance on fine-grained (Waterbirds, CelebA) and coarse-grained (PACS, VLCS) spurious correlation benchmarks. The best worst-group accuracy (WGA) and harmonic mean (HM) are shown in \textbf{boldface}, while the second best WGA and HM are shown in \underline{underline}. Unlike ROBOSHOT and TIE*, which assume prior knowledge of spurious attributes, our method requires no such information.}
    \label{tab:results}
\end{table*}

\subsection{Spuriousness-Aware Guided Exploration}
\paragraph{Prompt-based class separation.}
Given a fixed class label $c_i$ and an input image embedding $\mathbf{v}$, we construct a set of $M$ prompt templates $\mathcal{T} = \{T_j\}_{j=1}^M$ and generate class-specific textual descriptions $\mathcal{D}_i = \{T_j(c_i)\}_{j=1}^M$, where $T_j(c_i)$ denotes the $j$-th prompt filled with class $c_i$. For example,  as illustrated in Figure~\ref{fig:method-overview}(b) with $C=2$ and $M=3$, when $c_1$ is ``landbird", $T_1(c_1)$ could be ``a photo of a landbird". These prompts are encoded via the text encoder $\psi$, producing corresponding representations $\mathbf\{u^j_i\}_{j=1}^M$, where $\mathbf{u}^j_i = \psi(T_j(c_i))$. Given $N$ test images denoted as $\{x_n\}_{n=1}^N$, we obtain their visual embeddings using the vision encoder $\phi$, resulting in $\mathbf{v}_n = \phi(x_n)$ for each $n = 1, 2, \dots, N$.

We evaluate each prompt template based on its ability to separate different classes in the joint image-text embedding space. Concretely, for a given prompt template $T_j$ and image $x_n$, we compute its cosine similarity with the image embedding $\mathbf{v_n}$ across all classes $c_1,\ldots,c_C$. We then define the \textit{separation score} of $T_j$ for $x_n$ as:
\begin{equation}
\sigma_j^n = \max_{i\in \{1,\cdots, C\}} \frac{\mathbf{v}_n^T \mathbf{u}^j_i}{\|\mathbf{v}_n\|_2 \|\mathbf{u}^j_i\|_2} - \min_{i \in \{1,\cdots, C\}} \frac{\mathbf{v}_n^T \mathbf{u}^j_i}{\|\mathbf{v}_n\|_2 \|\mathbf{u}^j_i\|_2}.\label{equ:score}
\end{equation}
A higher $\sigma_j^n$ indicates that the prompt better distinguishes between classes in terms of alignment with the image embedding, suggesting it is less biased and more informative.

\paragraph{Template selection and zero-shot inference.}
For each image, we rank all prompt templates based on their \textit{separation scores} $\sigma_j^n$ and select the top-$K$ templates with the highest scores. Here, $K$ is a hyperparameter that determines how many top-scoring templates are chosen for zero-shot inference. These templates are then used to construct $K$ zero-shot classifiers. As illustrated in Figure~\ref{fig:method-overview}(c), we use $K=2$ as an example, where the top two prompt templates are selected for zero-shot inference. For each selected template $T_k$, we compute the text embeddings $\mathbf{u}_i^{k} = \psi(T_k(c_i))$ for all class labels $i=1,\ldots,C$. The final prediction for the $n$-th image is obtained by averaging the similarity scores across the $K$ classifiers:
\begin{equation}
    \hat{y_n} = \arg\max_{i}\frac{1}{K} \sum_{k=1}^{K}\frac{\mathbf{v}_n^T\mathbf{u}_i^{k}}{\|\mathbf{v}_n\|_2\|\mathbf{u}_i^{k}\|_2}.
\end{equation}
This procedure enables a robust ensemble of diverse, class-discriminative templates without requiring any external knowledge of spurious attributes.

\begin{table*}[t]
    \centering
    \footnotesize
    \setlength{\tabcolsep}{2pt}
    \begin{tabular}{llccc|ccc|ccc}
        \toprule
        \multirow{3}{*}{\centering Dataset} 
        & \multirow{3}{*}{\centering Model}
        & \multicolumn{3}{c}{\multirow{1}{*}{\centering Ensemble (K=80)}}
        & \multicolumn{3}{c}{\multirow{1}{*}{\centering Random}}
        & \multicolumn{3}{c}{\multirow{1}{*}{\centering Ours (SAGE)}}
        \\
        \cmidrule(lr){3-5}\cmidrule(lr){6-8}\cmidrule(lr){9-11}
        & 
        & AVG($\uparrow$) & WGA($\uparrow$) & HM($\uparrow$)
        & AVG($\uparrow$) & WGA($\uparrow$) & HM($\uparrow$)
        & AVG($\uparrow$) & WGA($\uparrow$) & HM($\uparrow$)

        \\
        \midrule \midrule
        \multirow{6}{*}{\centering Waterbirds}
        & CLIP-RN-50     
          & 92.7 & 48.4 & 63.6
          & 87.3 & 45.8 & 60.1
          & 91.5 & 41.3 & 56.9\\
        & CLIP-ViT-B/32  
          & 92.7 & 23.5 & 37.5
          & 91.4 & 34.2 & 49.8
          & 92.3 & 46.0 & 61.4\\
        & CLIP-ViT-L/14  
          & 93.2 & 33.3 & 49.1
          & 92.1 & 39.6 & 55.4
          & 90.2 & 47.8 &62.5\\
        & ALIGN            
          & 81.7 & 46.1 & 58.9
          & 80.0 & 46.6 & 58.9
          & 81.6 & 47.0 & 59.6\\
        & AltCLIP          
          & 88.3 & 29.6 & 44.3
          & 88.0& 34.1 & 49.2
          & 89.1 & 42.6 &57.6\\
        \cmidrule(lr){2-11}
        & \cellcolor[HTML]{FFFFFF}\textbf{Average}
          & \cellcolor[HTML]{EFEFEF}89.7
          & \cellcolor[HTML]{EFEFEF}36.2
          & \cellcolor[HTML]{EFEFEF}51.6
          & \cellcolor[HTML]{EFEFEF}87.8
          & \cellcolor[HTML]{EFEFEF}40.1
          & \cellcolor[HTML]{EFEFEF}55.1
          & \cellcolor[HTML]{EFEFEF}88.9
          & \cellcolor[HTML]{EFEFEF}\textbf{44.9}
          & \cellcolor[HTML]{EFEFEF}\textbf{59.7}
          \\
        \midrule

        \multirow{6}{*}{\centering CelebA}
        & CLIP-RN-50     
          & 79.1 & 70.5 & 74.6
          & 75.3 & 68.1 & 71.5
          & 82.2 & 77.5 & 79.8\\
        & CLIP-ViT-B/32  
          & 77.8 & 67.6 & 72.3
          & 76.0 & 69.1& 72.4
          & 79.6 & 76.0 &77.8\\
        & CLIP-ViT-L/14  
          & 82.0 & 75.5 & 78.6
          & 82.8 & 80.4 & 81.6
          & 85.7 & 83.9 &84.8\\
        & ALIGN            
          & 80.2 & 74.4 & 77.2
          & 81.0  & 76.7 & 78.8
          & 84.3 & 82.3 &83.3\\
        & AltCLIP          
          & 81.8 & 78.1 & 79.9
          & 83.7 & 80.0 & 81.8
          & 85.2 & 83.3&84.2\\
        \cmidrule(lr){2-11}
        & \cellcolor[HTML]{FFFFFF}\textbf{Average}
          & \cellcolor[HTML]{EFEFEF}80.2
          & \cellcolor[HTML]{EFEFEF}73.2
          & \cellcolor[HTML]{EFEFEF}76.5
          & \cellcolor[HTML]{EFEFEF}79.8
          & \cellcolor[HTML]{EFEFEF}74.9
          & \cellcolor[HTML]{EFEFEF}77.3
           & \cellcolor[HTML]{EFEFEF}83.4
          & \cellcolor[HTML]{EFEFEF}\textbf{80.6}
          & \cellcolor[HTML]{EFEFEF}\textbf{82.0}
          \\
         \bottomrule
    \end{tabular}
    \caption{Ablation results comparing our method (SAGE) with random prompt selection and prompt ensembling on the Waterbirds and CelebA datasets. \textbf{Bold} numbers indicate the best performance among the three. Our method consistently achieves higher worst-group accuracy (WGA) and harmonic mean (HM) of WGA and average accuracy (AVG), demonstrating its effectiveness in mitigating spurious correlations.}
    \label{tab:ablation_random}
\end{table*}

\section{Experiments}
\label{sec:experiments}
\subsection{Datasets}
We experiment on two datasets with \textit{fine-grained spurious correlations}, where each class is correlated with certain spurious features, such as backgrounds and gender.

\begin{itemize}
    \item \textbf{Waterbirds}~\cite{sagawa2019distributionally} is an image dataset for recognizing waterbirds and landbirds. It is generated synthetically by combining images of two kinds of birds from the CUB dataset~\cite{WelinderEtal2010} and the backgrounds, water and land,  from the Places dataset~\cite{zhou2017places}.
    \item \textbf{CelebA}~\cite{liu2015deep} is a large-scale image dataset of celebrity faces. The task is to identify hair color, non-blond or blond, with the gender as the spurious attributes.
\end{itemize}

\noindent We also experiment on two datasets with \textit{coarse-grained spurious correlations} where classes are associated with domain-specific features.

\begin{itemize}
    \item \textbf{PACS}~\cite{zhou2020deep} is a domain generalization dataset that includes four visually different styles: Photo, Art Painting, Cartoon, and Sketch. The task is to identify object categories (dog, elephant, giraffe, guitar, horse, house, person).
    \item \textbf{VLCS}~\cite{fang2013unbiased} is a domain generalization benchmark composed of four datasets: PASCAL VOC 2007~\cite{everingham2010pascal} (V), LabelMe~\cite{russell2008labelme} (L), Caltech101~\cite{bansal2023transfer} (C), and SUN09~\cite{choi2010exploiting} (S). It contains five overlapping classes (bird, car, chair, dog, and person) drawn from each dataset. The main challenge is to learn invariant features that generalize across these domains.
\end{itemize}

\subsection{Experimental Setup}
\noindent\textbf{Evaluated methods.} For zero-shot classification performance comparison, we evaluate the standard baseline ZS (Zero-Shot CLIP), as well as two recent state-of-the-art debiasing methods: ROBOSHOT \cite{adila2024zeroshot}, which utilizes large language models (LLMs) to identify spurious attributes and mitigate multimodal spurious bias, and TIE* \cite{ICLR2025_2649b1d4}, which introduces spurious prompts to infer pseudo-spurious labels and remove their influence from the image embeddings. 
Our proposed method, SAGE, by default selects the single prompt with the highest \textit{separation score}, i.e., setting $K = 1$ in Equation~\ref{equ:score}. All experiments were conducted on NVIDIA Quadro RTX 8000 GPUs (48GB).

\noindent\textbf{Models.} We evaluate CLIP and its variant models with different sizes and architectures: CLIP-RN-50, CLIP-ViT-B/32, CLIP-ViT-L/14, ALIGN~\cite{jia2021scaling}, and AltCLIP~\cite{chen2023altclip}. While the four ViT-based models are aligned with the setups in ROBOSHOT~\cite{adila2024zeroshot}, we additionally include CLIP-RN-50 to diversify the backbone architectures beyond Transformers. 


\noindent\textbf{Evaluation metrics.} We report the zero-shot classification performance of a model using three metrics: average accuracy (AVG), worst-group accuracy (WGA), and the harmonic mean (HM) of AVG and HM.
WGA is the \textit{primary} robustness metric that measures the model’s worst-group performance in the test set \cite{sagawadistributionally}, which can reflect the overall robustness to spurious correlations under distribution shifts.
While AVG reflects overall performance, it can be dominated by the majority of the test set. To better demonstrate the overall performance for both prediction and robustness, we propose to use the harmonic mean (HM) of AVG and WGA as another main evaluation metric. The HM is defined as:
\begin{equation}
    \text{HM} = \frac{2 \cdot \text{AVG} \cdot \text{WGA}}{\text{AVG} + \text{WGA}},
\end{equation}
which is sensitive to low values and penalizes imbalanced performance.
In zero-shot multimodal classification, a model may achieve high AVG by performing well on most samples but still have low WGA on the worst group. HM emphasizes the importance of maintaining both high AVG and WGA. Therefore, a robust model should exhibit high WGA and HM to indicate good performance on both prediction and robustness \cite{hasna2004harmonic}.



\subsection{Main Results}
We evaluate the effectiveness of our method, SAGE, in mitigating multimodal spurious biases at both the \textit{fine-grained} level where each class is correlated with specific spurious features (e.g., Waterbirds and CelebA) and the \textit{coarse-grained} level where classes are associated with broader domain-specific features (e.g., PACS and VLCS). As shown in Table~\ref{tab:results}, SAGE consistently ranks among the best or second-best performers in both HM and WGA across different models and datasets, demonstrating strong overall effectiveness and robustness.

\begin{figure*}[t]
    \centering
    \includegraphics[width=\linewidth]{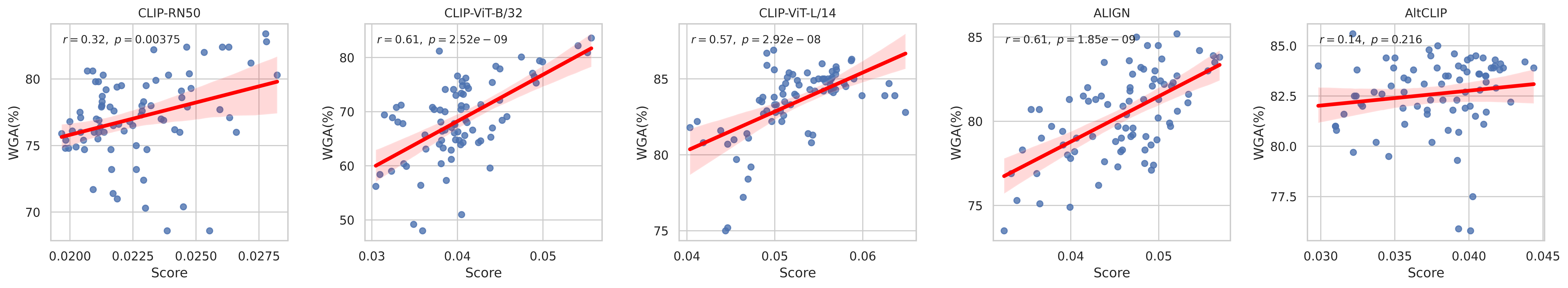}
    \caption{Pearson correlation analysis of \textit{separation scores} and WGA on CelebA across five backbone models. Each scatter plot shows the relationship between the score assigned to a candidate template and its corresponding WGA in zero-shot inference. The consistent positive correlation observed across all models indicates that templates with higher \textit{separation scores} tend to yield better worst-group performance, validating the effectiveness of our scoring method for robust template selection.}
    \label{fig:pcc_celebA}
\end{figure*}

\begin{figure*}[!htbp]
    \centering
    \includegraphics[width=\linewidth]{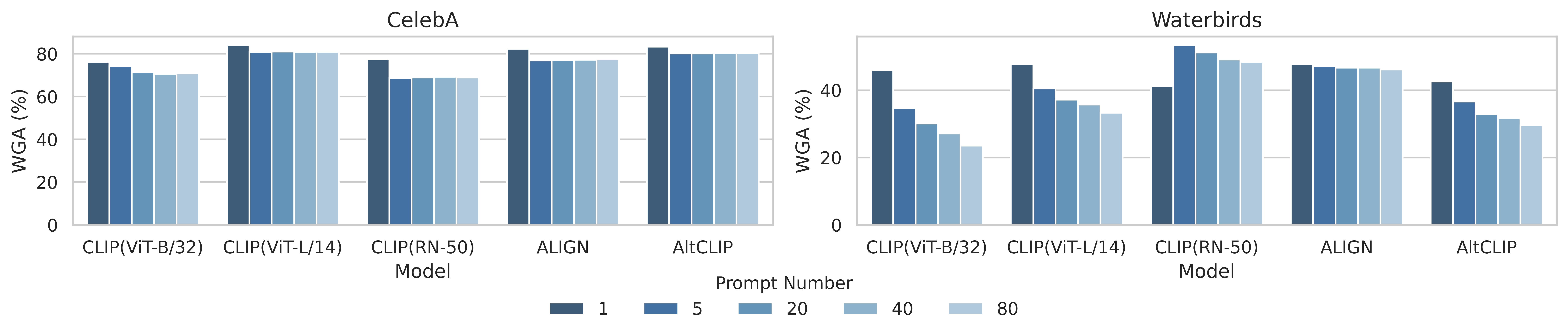}
    \caption{Ablation study on the effect of varying prompt numbers in different Models with our proposed method.}
    \label{fig:ablation}
\end{figure*}

To better capture performance consistency across varying model architectures, we report results averaged over five different backbones. This shows that SAGE consistently achieves the best WGA and HM, demonstrating strong balance between accuracy and fairness, and robust performance across subgroups.


These results highlight SAGE’s ability to reduce multimodal spurious bias without sacrificing predictive accuracy. Unlike ROBOSHOT and TIE*, which assume prior knowledge of spurious attributes, SAGE uses a fixed and general set of 80 prompts for controlled experiments, making it a practical, out-of-the-box debiasing solution for CLIP-style models. Note that SAGE is not limited to this specific setting and can be generalized to other sets of prompt templates.

\subsection{Ablation Studies}
\paragraph{Correlation Analysis of Separation Score and Accuracy.} 
To better understand the effectiveness of our template selection strategy, we analyze the relationship between the \textit{separation score} (used to rank prompt templates) and the zero-shot classification performance. Specifically, we compute the Pearson correlation coefficient (PCC) between the \textit{separation score} and WGA of each candidate template.

We perform this analysis on the CelebA dataset, a challenging benchmark characterized by subtle attribute differences and imbalanced group distributions. These properties make it well-suited for evaluating the robustness of prompt selection across diverse model backbones. For each of the five model backbones, we plot the \textit{separation score} against the corresponding WGA and report the PCC. As shown in Figure~\ref{fig:pcc_celebA}, the results consistently show a positive correlation.

These findings validate that templates with higher \textit{separation scores} tend to yield better worst-group performance, demonstrating that the \textit{separation score} is a reliable indicator of prompt robustness.


\paragraph{Validating the Prompt Selection Strategy in SAGE.} 
We evaluate the effectiveness of using the \textit{separation score} to select prompt templates by comparing three strategies: using all prompts (Ensemble), randomly selecting one prompt (Random, averaged over 3 runs), and selecting the prompt with SAGE. As shown in Table~\ref{tab:ablation_random}, our score-based method consistently achieves the best performance across nearly all settings on both Waterbirds and CelebA, significantly improving worst-group accuracy (WGA) and overall accuracy. In contrast, ensemble and random strategies often include suboptimal prompts that degrade performance. These results confirm that the \textit{separation score} is a reliable and effective criterion for prompt selection in zero-shot inference.

\paragraph{Impact of Number of Selected Prompts on Performance.} 
Even though templates can be ranked by their scores, the optimal number of prompts to use at inference time is not obvious. To explore this, we conduct an ablation study by varying the number of top-ranked templates $K$ used for zero-shot inference. We evaluate $K = 1, 5, 20, 40, 80$, ranging from using a single best prompt to all 80 templates.


Figure~\ref{fig:ablation} reports WGA across different $K$ values on Waterbirds and CelebA using five backbones. On CelebA (left), using only the top-1 prompt consistently achieves the best WGA, especially for CLIP-RN-50 and ALIGN. This aligns with CelebA’s fine-grained attributes, where precise prompt-image alignment is crucial and additional prompts may introduce noise.

On Waterbirds (right), the top-1 prompt yields the best performance for ViT-B/32, ViT-L/14, ALIGN, and AltCLIP, suggesting that prompts selected by SAGE remain effective even for large-scale pretrained models. In contrast, CLIP-RN-50 achieves its best results at $K=5$, indicating that moderate prompt diversity may help smaller models capture more robust and complementary visual cues.


Given these results, we choose $K=1$ as the default setting for SAGE, as it consistently provides strong performance across datasets and model sizes. While SAGE selects prompt templates for each image, we also observe interesting trends in the most frequently selected templates. 



\section{Conclusion}

In this paper, we addressed the challenge of mitigating multimodal spurious biases in pre-trained CLIP models for zero-shot classification. We first provided a theoretical definition of multimodal spurious bias and analyzed its impact on zero-shot classification. Based on these insights, we introduced SAGE, which is inspired by our theoretical insights. Our approach operates out-of-the-box with CLIP models, requiring no additional training data or prior knowledge of biases. It is broadly effective across various model sizes, architectures, and types of spurious correlations. Moreover, it achieves a strong balance between average and worst-group zero-shot classification accuracy, highlighting its practical utility in robust zero-shot predictions.

\section*{Acknowledgements}
This work is supported in part by the US National Science Foundation under grants CCF-2217071, CNS-2213700, IIS-2106913.  Any opinions, findings, and conclusions or recommendations expressed in this material are those of the
author(s) and do not necessarily reflect the views of the National Science Foundation.

\bibliography{aaai2026}

\appendix

\section*{Appendix}

\subsection*{Prompt Templates}

We provide the prompt templates used in the experiments in Table~\ref{tab:prompt_templates}. There are a total of 80 templates. The special symbol ``[CLASS]" is a placeholder, which will be replaced with actual class labels in zero-shot classification.

For the vanilla zero-shot classification method, we followed the prompts used in \cite{adila2024zeroshot}. Specifically, on the Waterbirds dataset, we used ``an image of landbird" and ``an image of waterbird"; on the CelebA dataset, we used ``person with dark hair" and ``person with blond hair"; on the PACS and VLCS datasets, we directly used the class names as the input text descriptions.
\begin{table*}[]
    \centering
    \begin{tabular}{llccl}
        \toprule
        \textbf{Dataset} 
         & \textbf{Groups}
         & \multicolumn{2}{c}{\textbf{Statistics}}
         & \textbf{Classes}\\
        \cmidrule(lr){3-4}
         & 
         & \textbf{Total Samples}
         & \textbf{\# Classes}
         & \\ 
        \midrule
        Waterbirds
        & \makecell[l]{landbird in land, landbird in water,\\
                       waterbird on land, waterbird on water}
        & 5794
        & 2
        & landbird, waterbird \\
        \midrule
        CelebA
        & \makecell[l]{male \& not blond, female \& not blond,\\
                       male \& blond, female \& blond}
        & 19962
        & 2
        & not blond, blond \\
        \midrule
        PACS
        & \makecell[l]{art, cartoons, photos,\\
                       sketches}
        & 9991
        & 7
        & \makecell[l]{dogs, elephant, giraffe,\\
                       guitar, house, person} \\
        \midrule
        VLCS
        & \makecell[l]{Caltech101, LabelMe,\\
                       SUN09, VOC2007}
        & 10725
        & 5
        & bird, car, chair, dog, person \\
        \bottomrule
    \end{tabular}
    \caption{Dataset statistics including groups, total samples, number of classes, and class labels.}
    \label{tab:dataset_statistics}
\end{table*}

\begin{table*}[]
    \centering
    \begin{tabular}{|l|l|}
        \hline
        \textbf{Prompt Templates} & \textbf{Prompt Templates} \\
        \hline
        a bad photo of a [CLASS]. & a photo of many [CLASS]. \\
        a sculpture of a [CLASS]. & a photo of the hard to see [CLASS]. \\
        a low resolution photo of the [CLASS]. & a rendering of a [CLASS]. \\
        graffiti of a [CLASS]. & a bad photo of the [CLASS]. \\
        a cropped photo of the [CLASS]. & a tattoo of a [CLASS]. \\
        the embroidered [CLASS]. & a photo of a hard to see [CLASS]. \\
        a bright photo of a [CLASS]. & a photo of a clean [CLASS]. \\
        a photo of a dirty [CLASS]. & a dark photo of the [CLASS]. \\
        a drawing of a [CLASS]. & a photo of my [CLASS]. \\
        the plastic [CLASS]. & a photo of the cool [CLASS]. \\
        a close-up photo of a [CLASS]. & a black and white photo of the [CLASS]. \\
        a painting of the [CLASS]. & a painting of a [CLASS]. \\
        a pixelated photo of the [CLASS]. & a sculpture of the [CLASS]. \\
        a bright photo of the [CLASS]. & a cropped photo of a [CLASS]. \\
        a plastic [CLASS]. & a photo of the dirty [CLASS]. \\
        a jpeg corrupted photo of a [CLASS]. & a blurry photo of the [CLASS]. \\
        a photo of the [CLASS]. & a good photo of the [CLASS]. \\
        a rendering of the [CLASS]. & a [CLASS] in a video game. \\
        a photo of one [CLASS]. & a doodle of a [CLASS]. \\
        a close-up photo of the [CLASS]. & a photo of a [CLASS]. \\
        the origami [CLASS]. & the [CLASS] in a video game. \\
        a sketch of a [CLASS]. & a doodle of the [CLASS]. \\
        an origami [CLASS]. & a low resolution photo of a [CLASS]. \\
        the toy [CLASS]. & a rendition of the [CLASS]. \\
        a photo of the clean [CLASS]. & a photo of a large [CLASS]. \\
        a rendition of a [CLASS]. & a photo of a nice [CLASS]. \\
        a photo of a weird [CLASS]. & a blurry photo of a [CLASS]. \\
        a cartoon [CLASS]. & art of a [CLASS]. \\
        a sketch of the [CLASS]. & an embroidered [CLASS]. \\
        a pixelated photo of a [CLASS]. & itap of the [CLASS]. \\
        a jpeg corrupted photo of the [CLASS]. & a good photo of a [CLASS]. \\
        a plushie [CLASS]. & a photo of the nice [CLASS]. \\
        a photo of the small [CLASS]. & a photo of the weird [CLASS]. \\
        the cartoon [CLASS]. & art of the [CLASS]. \\
        a drawing of the [CLASS]. & a photo of the large [CLASS]. \\
        a black and white photo of a [CLASS]. & the plushie [CLASS]. \\
        a dark photo of a [CLASS]. & itap of a [CLASS]. \\
        graffiti of the [CLASS]. & a toy [CLASS]. \\
        itap of my [CLASS]. & a photo of a cool [CLASS]. \\
        a photo of a small [CLASS]. & a tattoo of the [CLASS]. \\
        \hline
    \end{tabular}
    \caption{List of prompt templates.}
    \label{tab:prompt_templates}
\end{table*}

\subsection*{Dataset Details}
The details of the four datasets used in the experiments are shown in Table~\ref{tab:dataset_statistics}, including groups, total samples, number of classes, and class labels. As we focus on the zero-shot setting, only the information regarding the test set in each dataset is shown in Table~\ref{tab:dataset_statistics}.

\subsection*{Computing infrastructure.}
All experiments were conducted on a single NVIDIA Quadro RTX 8000 GPU (48GB) with 251GB RAM, using PyTorch 2.6 and the OpenCLIP implementation of CLIP. The operating system was Ubuntu 22.04. No training was performed; all results are from zero-shot inference using pre-trained CLIP models.

\subsection*{Limitations and Future Works}
While our proposed method demonstrates significant robustness, the performance of SAGE is contingent on the diversity and quality of the predefined prompt templates. A more diverse and task-relevant set of prompt templates could enhance the method's ability to select optimal prompts for mitigating multimodal spurious biases. Furthermore, SAGE operates within the framework of zero-shot debiasing, meaning it does not incorporate any training techniques for vision-language models (VLMs). Although this ensures the approach remains entirely out-of-the-box, future work could explore integrating SAGE with small labeled datasets to further refine and improve model performance. Lastly, while we evaluated SAGE across multiple datasets, extending its evaluation to a broader range of tasks and bias types would provide deeper insights into its generalizability and broader applicability.

\begin{figure}[t]
    \centering
    \includegraphics[width=\linewidth]{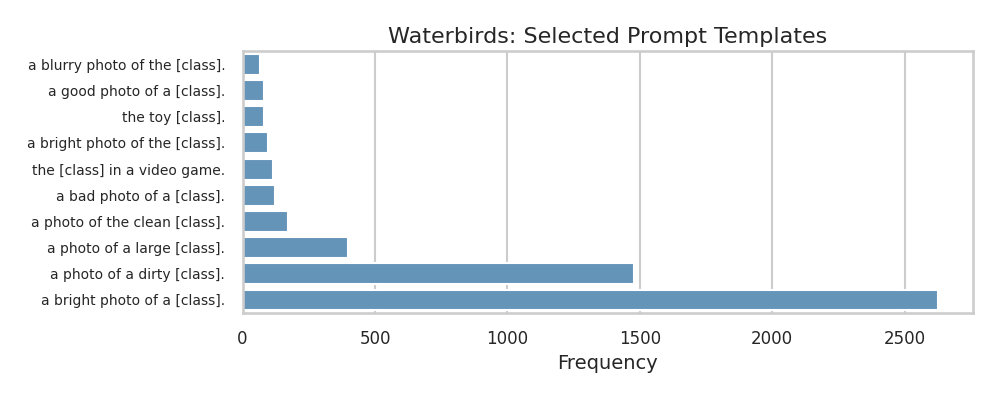}
    \vskip -10pt
    \caption{Most frequently selected prompt templates for each class by our method with CLIP-ViT-B/32 in the Waterbirds dataset.}
    \label{fig:top-prompts}
\end{figure}

\begin{figure}[t]
    \centering
    \includegraphics[width=\linewidth]{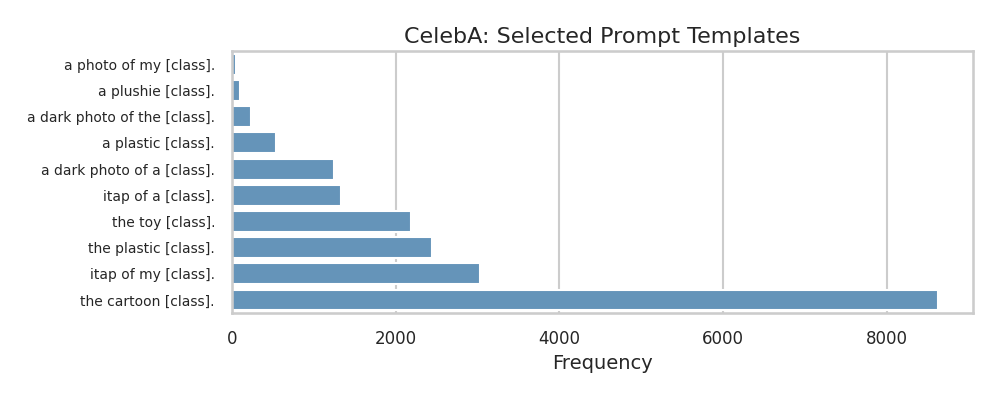}
    \vskip -10pt
    \caption{Top-10 most frequently selected prompt templates by our method for each class with CLIP-ViT-B/32 in the CelebA dataset.}
    \label{fig:top-prompts-celeba}
\end{figure}

\begin{figure}[t]
    \centering
    \includegraphics[width=\linewidth]{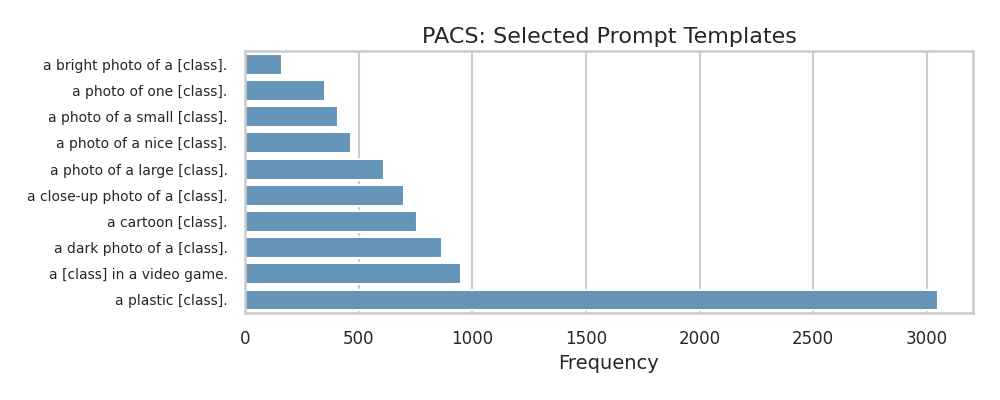}
    \vskip -10pt
    \caption{Top-10 most frequently selected prompt templates by our method  for each class with CLIP-ViT-B/32 in the PACS dataset.}
    \label{fig:top-prompts-pacs}
\end{figure}

\begin{figure}[t]
    \centering
    \includegraphics[width=\linewidth]{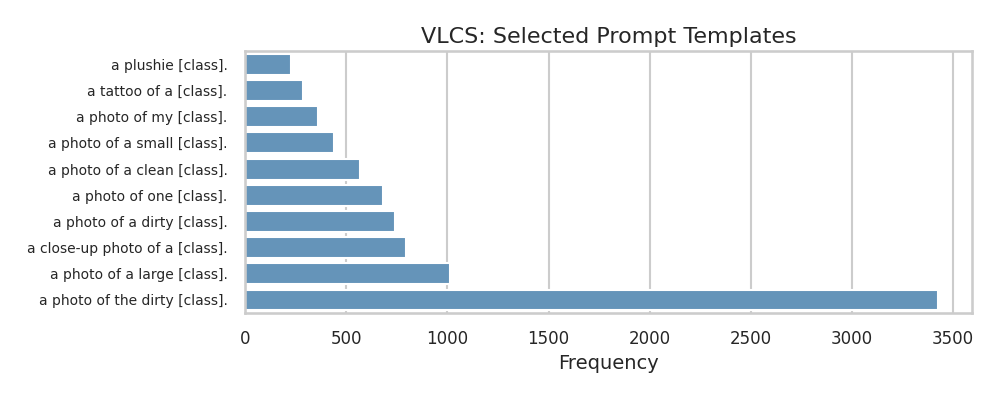}
    \vskip -10pt
    \caption{Top-10 most frequently selected prompt templates by our method  for each class with CLIP-ViT-B/32 in the VLCS dataset.}
    \label{fig:top-prompts-vlcs}
\end{figure}

\subsection*{Analysis on the Selected Prompt Templates}

\label{sec:anaylsis_selected}

Our method, SAGE, selects the highest-scoring prompt template for each test image. To better understand this selection behavior, we analyze which templates are most frequently chosen across the Waterbirds test set. Figure~\ref{fig:top-prompts} presents the top-10 most selected templates and their frequencies in the Waterbirds dataset.

The most frequently selected template is “a bright photo of a [CLASS]”. This template uses a neutral adjective, “bright”, which is generally unrelated to typical spurious features such as background. In the Waterbirds dataset, where background often confounds classification, prompts like this may help the model focus more on object-relevant features rather than contextual features. The second most common template is “a photo of a dirty [CLASS]”, which includes an uncommon description for the classes in the dataset. This unusual wording might cause the text embedding to shift away from the typical distribution seen during training, potentially reducing spurious correlations between text and image.

Our method applies the selected prompt uniformly across all classes. This ensures that the model’s predictions are primarily influenced by the visual input rather than differences in prompt wording, which helps maintain consistency and avoids introducing additional variability.

We show more prompt templates selected by our method in Figures ~\ref{fig:top-prompts-celeba}, \ref{fig:top-prompts-pacs}, and \ref{fig:top-prompts-vlcs}. We observe that, in general, the most frequently selected template is different across classes and datasets. Interestingly, we observe that one specific prompt template is overwhelmingly favored across all test images within each dataset. This suggests that certain templates inherently provide stronger class separation in the embedding space, possibly due to their neutral semantics, alignment with pretraining distribution, or globally optimal positioning in the joint space. Such behavior highlights the potential of our scoring-based selection strategy to identify robust and broadly effective prompts without requiring dataset-specific tuning.





\end{document}